\documentclass{article}

\PassOptionsToPackage{numbers, compress}{natbib}

\usepackage[main, final]{neurips_2026}

\usepackage[utf8]{inputenc} 
\usepackage[T1]{fontenc}    
\usepackage{hyperref}       
\usepackage{url}            
\usepackage{booktabs}       
\usepackage{amsfonts}       
\usepackage{nicefrac}       
\usepackage{microtype}      
\usepackage{xcolor}         
\usepackage{pifont}

\usepackage{amsmath, amssymb, amsthm}
\usepackage{bm}
\usepackage{bbm}
\usepackage{mathtools}
\usepackage{algorithm}
\usepackage{algpseudocode}
\usepackage{hyperref}
\usepackage{xcolor}
\usepackage{booktabs}
\usepackage{natbib}

\newtheorem{proposition}{Proposition}
\newtheorem{remark}{Remark}
\usepackage{graphicx}
\usepackage{wrapfig}
\usepackage{subcaption}

\newcommand{\R}{\mathbb{R}}

\newcommand{\ba}{\bm{a}}
\newcommand{\bmu}{\bm{\mu}}
\newcommand{\bB}{\bm{B}}
\newcommand{\bD}{\bm{D}}
\newcommand{\bI}{\bm{I}}
\newcommand{\norm}[1]{\left\| #1 \right\|}
\newcommand{\inner}[2]{\left\langle #1,\, #2 \right\rangle}
\newcommand{\ours}{MAGS}

\usepackage{xcolor}

\title{Manifold-Guided Attention Steering}

\author{\begin{tabular}{ccc} Ian Li & Kapilesh Guruprasad  & Raunak Sengupta  \\ \texttt{i6li@ucsd.edu} & \texttt{kguruprasad@ucsd.edu} & \texttt{r2sengupta@ucsd.edu} \\[2ex] Ninad Satish & Loris D'Antoni & Rose Yu \\ \texttt{nsatish@ucsd.edu} & \texttt{ldantoni@ucsd.edu} & \texttt{roseyu@ucsd.edu} \\ \end{tabular}\\[5ex] University of California, San Diego}

\begin{document}

\maketitle

\begin{abstract}
Large language models frequently produce errors in reasoning tasks despite possessing the underlying knowledge required for correct reasoning. One possible approach to improve reasoning consistency is through activation steering. However, existing activation steering approaches apply fixed, pre-computed correction vectors, ignoring where the model currently sits along its generation trajectory; the result is indiscriminate perturbation that disrupts already-correct steps as freely as erroneous ones. We propose \textbf{Ma}nifold-\textbf{G}uided Attention \textbf{S}teering (\ours{}), a trajectory-aware inference-time intervention grounded in a geometric observation: the output activations of specific attention heads diverge from a low-dimensional \emph{correctness manifold} at the point of error, and this deviation compounds through subsequent steps. For each identified attention head, we learn a low-dimensional subspace from contrastive pairs of correct and incorrect traces that capture the directions along which error behavior deviates from correct behavior. During inference, we monitor each head's proximity to this manifold and apply a targeted projection correction when deviation exceeds a learned threshold, steering the attention output back toward the correct subspace before the error propagates. \ours{} consistently outperforms both unsteered baselines and static steering approaches across benchmarks spanning mathematical reasoning (MATH-500, GSM8K), code generation (HumanEval, MBPP), and molecular generation (SMILES), suggesting that correctness manifolds are a general feature of LLM attention geometry.
 
\end{abstract}

\section{Introduction}
 
Large language models (LLMs) frequently produce reasoning errors in multi-step reasoning tasks despite possessing the underlying capability to solve them. Under repeated sampling, the same model and prompt that produce an incorrect solution will often produce a correct one \citep{chen2021codex, wang2023selfconsistency, cobbe2021gsm8k}, suggesting that the capability is present, but its reliable expression is not.  Process-level annotations further confirm that most errors arise at intermediate reasoning steps rather than from a terminal absence of knowledge \citep{lightman2023verify, uesato2022solving}. Since errors arise in the generation process rather than from missing knowledge, correcting them at inference time is a natural and practical target.

Existing activation steering methods \citep{turner2023activation, zou2023representation, panickssery2024caa, vu2025angular} apply a fixed correction vector to the residual stream at generation steps. These approaches are well-suited for persistent, global behaviors (tone, style, sentiment) but are structurally mismatched to reasoning. A reasoning trajectory may proceed correctly for many steps before committing a localized error at step $t^\ast$; applying constant corrections may corrupt the correct intermediate steps while offering no guarantee of intercepting the error.
 
We hypothesize that reasoning errors manifest as a drift in a low-dimensional subspace of individual attention heads' output space: correct and incorrect trajectories occupy geometrically separable regions, and the transition from correct to incorrect behavior follows a structured, low-rank direction.  We perform diagnostic experiments to confirm this hypothesis and in fact discover that correct and incorrect trajectories are highly separable by a low-dimensional subspace of attention-head activations. This is consistent with mechanistic interpretability findings that individual attention heads are functionally specialized~\citep{elhage2021mathematical,  wang2022interpretability}, and with the linear representation hypothesis~\citep{park2023linear}, which posits that semantically meaningful distinctions are encoded along low-dimensional linear directions. 

\begin{center}
\textit{Therefore, one should steer only when their attention outputs have drifted into the error subspace.}
\end{center}

We propose \textbf{Manifold-Guided Attention Steering (\ours{})}: an adaptive intervention that dynamically steers the attention head outputs when reasoning errors are detected. \ours{}  outperforms static steering baselines across reasoning benchmarks and molecular generation on three model families, including Llama, Gemma, and GPT-OSS.

In summary, our contributions are as follows:

\begin{enumerate}
    \item We hypothesize that reasoning errors manifest as structured drift in a low-dimensional subspace of individual attention heads' output space, and confirm this hypothesis with diagnostic experiments showing that correct and incorrect trajectories are highly separable (Section~\ref{sec:hypothesis}).

    \item We propose Manifold-Guided Attention Steering (MAGS), an adaptive inference-time mechanism that monitors attention heads for reasoning drift and applies dynamic correction only when needed (Section~\ref{sec:steer}).

    \item Empirically, MAGS consistently outperforms static steering baselines on benchmarks across three model families, by up to $10.8\%$ while incurring negligible inference overhead (Section~\ref{sec:exp}).
\end{enumerate}

\section{Related Work}
\label{sec:related}
We discuss existing inference-time steering methods and geometric interpretability work. Existing steering methods apply fixed corrections without error-detection mechanisms; existing interpretability work establishes the geometric structure that we exploit and extend for adaptive intervention. To our knowledge, \ours{} is the first method to combine per-step detection with geometry-aware, conditional correction at the attention-head level. 
\begin{figure}
    \centering
    \includegraphics[width=\linewidth]{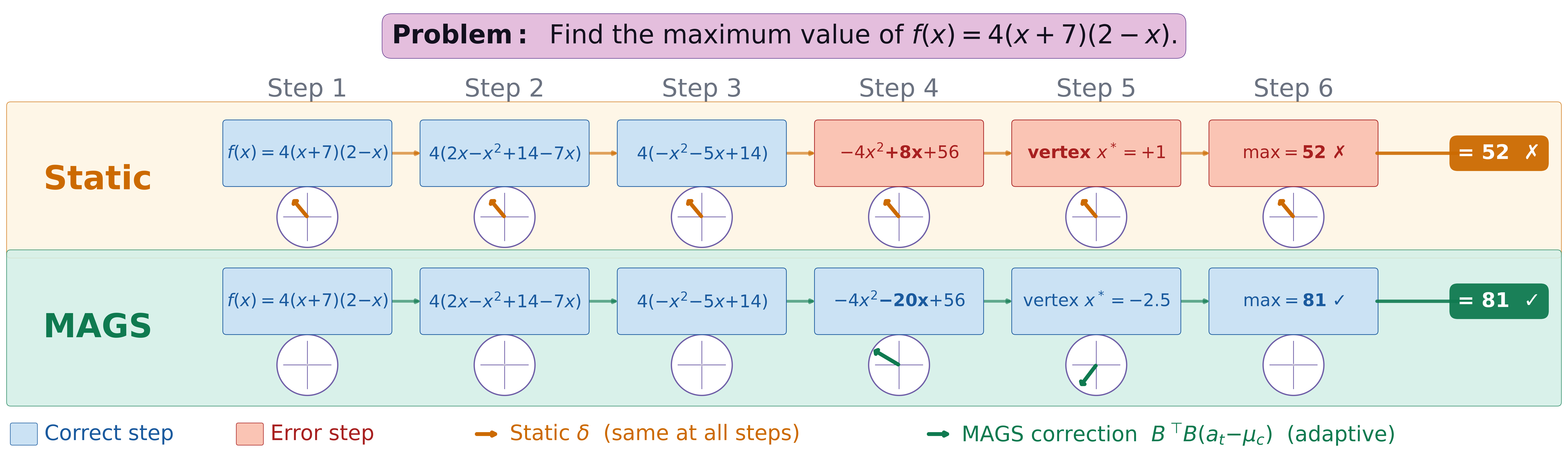}
    \caption{\textbf{Comparison of static and Manifold-Guided Attention Steering (\ours{}) on an example problem.} Step-by-step reasoning traces for a static baseline and \ours{}. Blue boxes denote correct reasoning steps; red boxes denote erroneous ones. }
    \label{fig:fig1}
\end{figure}
\subsection{Activation Steering and Inference-time Intervention}
 
Activation steering methods modify internal representations at inference time without updating model parameters. \emph{Activation Addition} \citep{turner2023activation} adds a fixed difference vector to the residual stream throughout generation. \emph{Contrastive Activation Addition}  (CAA; \citealt{panickssery2024caa}) improves reliability by averaging difference vectors across many contrastive prompt pairs. \emph{Representation  Engineering} (RepE; \citealt{zou2023representation}) extracts principal steering directions from contrastive activations via PCA. \emph{Angular  Steering} \citep{vu2025angular} replaces additive correction with a 2D rotation applied uniformly across all layers. \emph{Inference-Time Intervention} (ITI; \citealt{li2023iti}) shifts attention head outputs along a probing direction to improve truthfulness. \emph{CREST} \citep{zhang2025crest} identifies reasoning-relevant attention heads and applies fixed steering vectors to them, but does not adapt the correction to the model's current trajectory state.

All of these methods share a common limitation: they steer along a fixed direction regardless of the current activation state. MAGS addresses this by introducing a dynamic proximity trigger that fires only when a head drifts toward the error subspace and applying a step-dependent correction whose direction is determined by the current activation's projection onto the error subspace rather than a fixed vector (as illustrated in Figure~\ref{fig:fig1}).
 
\subsection{Geometric Structure of Transformer Representations}
 
A growing body of work establishes that transformer representations have rich geometric structure that can be leveraged for analysis and intervention.
 
\emph{Mechanistic interpretability} studies decompose transformer computation into interpretable circuits. \citet{elhage2021mathematical} show that attention heads implement primitive operations (copying, retrieval, inhibition) whose outputs compose additively in the residual stream.  \citet{wang2022interpretability} shows that multi-step tasks are implemented by sparse circuits across a small number of heads. Together, these results suggest that reasoning failures are likely attributable to specific heads in failure modes, which motivates our head-level intervention.
 
The \emph{linear representation hypothesis} \citep{park2023linear} posits that semantically meaningful distinctions are encoded along low-dimensional linear directions in transformer representations. \citet{burns2023ccs} show that truth has a linear representation findable by contrastive probing, establishing a precedent for our contrastive PCA construction. \citet{zou2023representation} confirm that high-level concepts, including reasoning quality, are linearly decodable from residual stream activations. MAGS extends this line of work to the per-head level, showing that the correct-to-error transition in reasoning trajectories is also linearly 
structured within individual head-output spaces.

\section{Detecting Error Drift in Attention Heads}
\label{sec:hypothesis}
 We hypothesize that incorrect reasoning traces induce drift in the output of a subset of attention heads toward a low-dimensional error subspace, geometrically separable from the subspace occupied by correct traces. We empirically validate this hypothesis by constructing a contrastive error manifold per head and demonstrating that a proximity-based score achieves strong trajectory-level error detection across layers and heads.
\subsection{Setup and Notation}

We consider a set of reasoning problems $\mathcal{P} = \{p_1, \ldots, p_N\}$. For each problem $p_i$, we generate $S$ independent reasoning traces $\mathcal{T}_i = \{\tau_{i,1}, \ldots, \tau_{i,S}\}$. Each trace $\tau$ is a token sequence of length $L_\tau$, and is assigned a binary label $y_\tau \in \{0, 1\}$ (1 = correct final answer). Assume $|\mathcal{T}_i^+| \geq 1$ and $|\mathcal{T}_i^-| \geq 1$ for all $p_i$. We write $\mathcal{T}_i^+ = \{\tau \in \mathcal{T}_i : y_\tau = 1\}$ and $\mathcal{T}_i^- = \{\tau \in \mathcal{T}_i : y_\tau = 0\}$ for the correct-trace and error-trace sets of problem $p_i$.

Given a transformer-based language model with $L$ layers and $H$ attention heads per layer, where each head operates on a $d_h$-dimensional output space. For a sample $\tau$ and head $(l,h)$, the sequence of attention head outputs is:
\begin{equation}
    \mathbf{A}_\tau^{(l,h)} \;=\;
    \bigl[\ba_1^{(l,h,\tau)},\; \ba_2^{(l,h,\tau)},\; \ldots,\;
    \ba_{L_\tau}^{(l,h,\tau)}\bigr]
    \;\in\; \R^{L_\tau \times d_h}.
\end{equation}
\subsection{Contrastive Error Manifold Construction}
\label{sec:manifold}
Given correct and incorrect reasoning traces, we construct a per-head error subspace by identifying the low-dimensional directions along which correct and incorrect activations diverge.
\paragraph{Per-problem difference vectors.}
For each problem and each head $(l, h)$, define the per-class mean:
\begin{equation}
    \bmu_{c,i}^{(l,h)} = \frac{1}{\sum_{\tau \in \mathcal{T}_i^+} L_\tau}
    \sum_{\tau \in \mathcal{T}_i^+} \sum_{t=1}^{L_\tau} \ba_t^{(l,h,\tau)},
    \qquad
    \bmu_{e,i}^{(l,h)} = \frac{1}{\sum_{\tau \in \mathcal{T}_i^-} L_\tau}
    \sum_{\tau \in \mathcal{T}_i^-} \sum_{t=1}^{L_\tau} \ba_t^{(l,h,\tau)}.
    \label{eq:mu}
\end{equation}
The \emph{contrastive difference vector} for problem $p_i$ and head $(l,h)$ is:
\begin{equation}
    \bm{\delta}_i^{(l,h)}
    \;=\;
    \bmu_{e,i}^{(l,h)} \;-\; \bmu_{c,i}^{(l,h)}
    \;\in\; \R^{d_h}.
    \label{eq:diff_vec}
\end{equation}
By construction, $\bm{\delta}_i^{(l,h)}$ cancels all directions uniformly activated by problem $p_i$ regardless of correctness, isolating the \emph{directional shift} attributable to the error. 
\paragraph{Difference matrix.}
\begin{wrapfigure}{r}{0.5\linewidth}
    \centering
    \vspace{-1.8\baselineskip}
    \includegraphics[width=\linewidth]{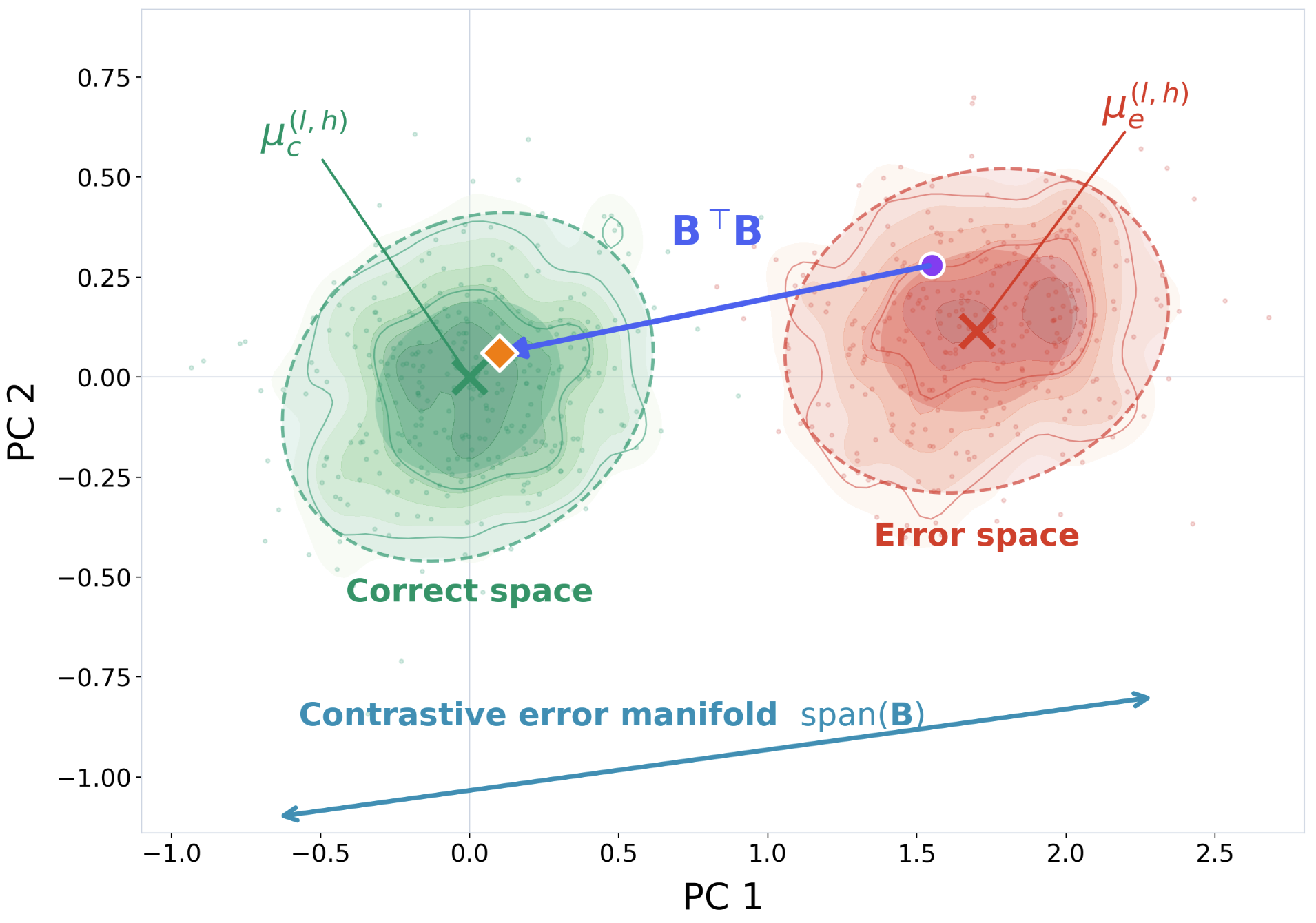}
    \vspace{-1\baselineskip}
    \caption{Schematic of the contrastive error manifold. Correct and error activation spaces are separated along the learned subspace $\mathrm{span}(\mathbf{B})$. Given an error activation $a_t$, the projection $\mathbf{B}^\top\!\mathbf{B}$ gives the direction to map  back toward $\mu_c^{(l,h)}$.}
    \label{fig:manifold}
    \vspace{-2.5\baselineskip}
\end{wrapfigure}
For a set of $N$ problems, we stack the difference vectors row-wise:
\begin{equation}
    (\mathbf{D}^{(l,h)})^T
    \;=\;
    \begin{bmatrix}
        \bm{\delta}_1^{(l,h)} & \dots & \bm{\delta}_P^{(l,h)}
    \end{bmatrix}
    \;\in\; \mathbb{R}^{d_h \times N}.
    \label{eq:diff_matrix_transpose}
\end{equation}
\paragraph{Error subspace via PCA.}
Compute the compact singular value decomposition
$\bD^{(l,h)} = \bm{U}\bm{\Sigma}\bm{V}^\top$.
Define the \emph{error subspace basis} as the top-$k$ right singular vectors:
\begin{equation}
    \bB^{(l,h)}
    \;=\;
    \bm{V}_{:,1:k}^\top
    \;\in\; \R^{k \times d_h},
    \label{eq:basis}
\end{equation}
where the rows of $\bB^{(l,h)}$ are orthonormal. The error subspace captures
the $k$ directions in head-output space along which correct-to-incorrect
deviation has the greatest variance across problems, as illustrated in
Figure~\ref{fig:manifold}. 
\vspace{0.2em}
\paragraph{Global correct-state centroid.}
Compute a global reference point from all correct traces:
\begin{equation}
    \bmu_c^{(l,h)}
    \;=\;
    \frac{\displaystyle\sum_{i}\sum_{\tau \in \mathcal{T}_i^+}
    \sum_{t=1}^{L_\tau} \ba_t^{(l,h,\tau)}}
    {\displaystyle\sum_{i}\sum_{\tau \in \mathcal{T}_i^+} L_\tau}.
    \label{eq:global_mean}
\end{equation}

This serves as the centering reference at inference time, since per-problem
means are not available during generation.
\begin{remark}
    \label{rem:contrastive}
    The choice to fit PCA on \emph{difference vectors} rather than on the raw error-state vectors is critical. If PCA were fit on $\mathcal{E}^{(l,h)}$ directly (pooled baseline), the dominant directions would reflect problem-level features (e.g., problem difficulty or topic) rather than the intrinsic error direction. The contrastive construction ensures that $\bD^{(l,h)}$ has zero mean in any direction that is uniformly correlated with problem content, leaving only the error-specific signal.
\end{remark}
\subsection{Proximity-Based Error Detection}
\label{sec:detection}

To detect when a head has drifted into the error subspace, we measure how much its current output projects onto the learned error subspace.  A large projection indicates the head is behaving similarly to how it behaves in erroneous traces. At each decode step $t$ during inference, for each monitored head $(l, h)$, we compute the \emph{proximity score}:
\begin{equation}
    d_t^{(l,h)}
    \;=\;
    \norm{\bB^{(l,h)}\bigl(\ba_t^{(l,h)} - \bmu_c^{(l,h)}\bigr)}^2
    \;=\;
    \bigl(\ba_t^{(l,h)} - \bmu_c^{(l,h)}\bigr)^\top
    \bB^{(l,h)\top}\bB^{(l,h)}
    \bigl(\ba_t^{(l,h)} - \bmu_c^{(l,h)}\bigr).
    \label{eq:proximity}
\end{equation}
This is the squared norm of the projection of the centered head output onto the error subspace. A large value indicates that the current head output has a substantial component along the learned error directions.

To avoid flagging normal generation steps, we calibrate a per-head threshold on correct traces and trigger only when the proximity score exceeds it. We fire a correction only when the proximity score exceeds a per-head threshold calibrated on correct traces,  ensuring that already-correct steps are left undisturbed. A \emph{trigger} fires at step $t$ for head $(l,h)$ when:
\begin{equation}
    d_t^{(l,h)} \;>\; \tau^{(l,h)},
    \label{eq:trigger}
\end{equation}
where $\tau^{(l,h)}$ is set to the $q$-th percentile of
$\{d_t^{(l,h)}\}$ computed over all token steps from correct trajectories in the training set. 

\begin{figure}
    \centering
    \includegraphics[width=\linewidth]{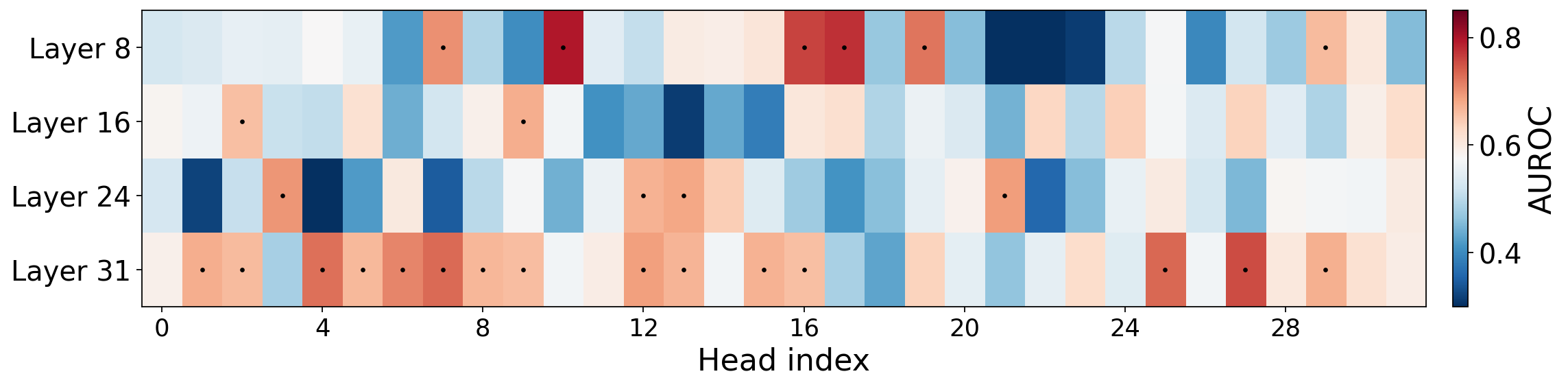}
    \caption{Per-head error detection AUROC across four monitored layers for the Math-Instruct dataset with Llama-3.1-8b-Instruct. Dots mark heads with AUROC $> 0.65$. The signal is sparse and concentrated in specific heads.}
    \label{fig:auroc_heatmap}
\end{figure}

\subsection{Empirical Validation of the Drift Hypothesis}

We validate whether the learned error subspace carries a detectable signal for distinguishing correct from incorrect reasoning trajectories. Using Math-Instruct traces from Llama-3.1-8B-Instruct collected at layers $\{8, 16, 24, 31\}$, we perform a problem-level 70/30 train/test split, ensuring that all traces of a given problem land in the same split and the manifold never observes test problems during construction. We build the contrastive error manifold on the training split and evaluate on the held-out test split.

For each trace, we aggregate the per-step proximity scores $\{d_t^{(l,h)}\}$ into a single scalar using max aggregation, and classify the trace as incorrect if the score exceeds a threshold  $\tau^{(l,h)}$ calibrated on the training set by maximizing balanced accuracy. We report trajectory-level AUROC for each $(l, h)$ pair independently. Figure~\ref{fig:auroc_heatmap} shows the per-head AUROC across all monitored layers. The signal is sparse and concentrated in specific heads (marked dots for AUROC $> 0.65$), confirming that error drift is a structured, localized phenomenon rather than a diffuse property of all heads.

\section{Manifold-Guided Attention Steering}
\label{sec:steer}
Having established that proximity scores reliably detect drift toward the error subspace, we now describe how MAGS exploits this signal to apply a targeted correction to the attention head output at inference time.

\paragraph{Head selection.}
Rather than monitoring all $L \times H$ heads, we pre-select the top-$K$ heads by \emph{held-out AUROC}: for each head, AUROC is computed between the trajectory-level error label and the mean proximity score over the trajectory, on a held-out problem split. Monitoring only the top-$K$ heads reduces per-step overhead from $O(L \cdot H \cdot k \cdot d_h)$ to $O(K \cdot k \cdot d_h)$.

\paragraph{Steering by error-component correction.} When head $(l,h)$ triggers at step $t$, we apply the following in-place correction to its output \emph{before} it is passed to the output projection of layer $l$, $W_O^{(l)}$:
\begin{equation}
    \tilde{\ba}_t^{(l,h)}
    \;=\;
    \ba_t^{(l,h)}
    \;-\;
    \alpha\,\bB^{(l,h)\top}\bB^{(l,h)}
    \bigl(\ba_t^{(l,h)} - \bmu_c^{(l,h)}\bigr),
    \label{eq:correction}
\end{equation}
where $\alpha \in (0, 1]$ is a steering strength hyperparameter that we can control empirically.

Suppose $\alpha=1$, and let $\bm{P}_\perp^{(l,h)} = \bI_{d_h} - \bB^{(l,h)\top}\bB^{(l,h)}$ denote
the orthogonal projector onto the \emph{complement} of the error subspace.
Then \eqref{eq:correction} can be written equivalently as:
\begin{equation}
    \tilde{\ba}_t^{(l,h)}
    \;=\;
    \bmu_c^{(l,h)}
    \;+\;
    \bm{P}_\perp^{(l,h)}
    \bigl(\ba_t^{(l,h)} - \bmu_c^{(l,h)}\bigr).
    \label{eq:correction_proj}
\end{equation}
This form makes the semantics transparent: we decompose the deviation of the head output from the correct-state mean into an error-subspace component and a complement component, then discard only the former. Algorithm~\ref{alg:\ours{}} summarizes the complete inference procedure,  combining the proximity check (Section~\ref{sec:detection}) and the error-component correction into a single decode loop.

We formalize the key advantage over full residual stream correction: the $d_h - k$ directions unrelated to the error manifold are completely untouched.

\begin{proposition}[Information Preservation]
    The correction \eqref{eq:correction} preserves all information in
    $\ba_t^{(l,h)}$ that lies in the $(d_h - k)$-dimensional complement of
    the error subspace.
    Specifically, for any vector $\bm{v} \in \R^{d_h}$ with
    $\bB^{(l,h)}\bm{v} = \bm{0}$:
    \begin{equation}
        \inner{\tilde{\ba}_t^{(l,h)}}{\bm{v}}
        \;=\;
        \inner{\ba_t^{(l,h)}}{\bm{v}}.
    \end{equation}
\end{proposition}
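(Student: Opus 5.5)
The plan is to compute the inner product directly from the definition of the correction in \eqref{eq:correction} and use only two facts: linearity of the inner product, and the orthogonality condition $\bB^{(l,h)}\bm{v} = \bm{0}$. No property of the SVD is needed beyond the shape $\bB^{(l,h)} \in \R^{k\times d_h}$. The orthonormality of the rows matters only for interpreting $\bB^{(l,h)\top}\bB^{(l,h)}$ as a projector, not for the identity itself.

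First, I expand
\[
\inner{\tilde{\ba}_t^{(l,h)}}{\bm{v}} = \inner{\ba_t^{(l,h)}}{\bm{v}} - \alpha \inner{\bB^{(l,h)\top}\bB^{(l,h)}\bigl(\ba_t^{(l,h)} - \bmu_c^{(l,h)}\bigr)}{\bm{v}}.
\]
So it suffices to show that the second term vanishes. Writing $\bm{w} = \bB^{(l,h)}\bigl(\ba_t^{(l,h)} - \bmu_c^{(l,h)}\bigr) \in \R^k$, I move the transpose across the inner product:
\[
\inner{\bB^{(l,h)\top}\bm{w}}{\bm{v}} = \inner{\bm{w}}{\bB^{(l,h)}\bm{v}} = \inner{\bm{w}}{\bm{0}} = 0.
\]
This holds for every $\alpha$, and for every $t$ regardless of whether the trigger \eqref{eq:trigger} fires. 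On non-triggered steps the output is unchanged anyway, so the claim is trivial there.

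To connect with the ``$(d_h-k)$-dimensional complement'' phrasing, I would remark that $\{\bm{v} : \bB^{(l,h)}\bm{v} = \bm{0}\}$ is the null space of $\bB^{(l,h)}$. Since $\bB^{(l,h)}$ has $k$ orthonormal rows, it has rank $k$, so this null space has dimension $d_h - k$. It is exactly the range of $\bm{P}_\perp^{(l,h)}$, and therefore $\bm{P}_\perp^{(l,h)}\tilde{\ba}_t^{(l,h)} = \bm{P}_\perp^{(l,h)}\ba_t^{(l,h)}$. There is no real obstacle here. The only point needing care is noting that this requires $k \le \mathrm{rank}(\bD^{(l,h)})$, so that $\bB^{(l,h)}$ genuinely has $k$ orthonormal rows and the dimension count is correct.
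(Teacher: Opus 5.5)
Your proposal is correct and follows the paper's proof: expand $\inner{\tilde{\ba}_t^{(l,h)}}{\bm{v}}$ by linearity, move $\bB^{(l,h)\top}$ across the inner product, and use $\bB^{(l,h)}\bm{v}=\bm{0}$. Yours is slightly more careful in one respect: you keep the factor $\alpha$, which the paper's appendix drops without comment (harmlessly, since the term vanishes for any $\alpha$).
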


\begin{algorithm}[t]
\caption{Manifold-Guided Head Steering (\ours{}) — Inference}
\label{alg:\ours{}}
\begin{algorithmic}[1]
\Require Pre-computed manifolds $\{\bB^{(l,h)}, \bmu_c^{(l,h)}, \tau^{(l,h)}\}$ for top-$K$ heads; prompt $x_{1:\mathrm{prompt}}$
\State $t \leftarrow 0$
\While{generation not complete}
    \State $t \leftarrow t + 1$
    \State Run forward pass for token $t$; collect $\ba_t^{(l,h)}$ for all
           monitored $(l,h)$ via pre-hooks on $W_O^{(l)}$
    \For{each monitored head $(l,h)$ in layer order}
        \State Compute $d_t^{(l,h)} \leftarrow
               \norm{\bB^{(l,h)}(\ba_t^{(l,h)} - \bmu_c^{(l,h)})}^2$
               \hfill\textit{// Eq.~\eqref{eq:proximity}}
        \If{$d_t^{(l,h)} > \tau^{(l,h)}$}
            \State $\tilde{\ba}_t^{(l,h)} \leftarrow
                   \ba_t^{(l,h)} - \alpha\bB^{(l,h)\top}\bB^{(l,h)}
                   (\ba_t^{(l,h)} - \bmu_c^{(l,h)})$
                   \hfill\textit{// Eq.~\eqref{eq:correction}}
            \State Replace $\ba_t^{(l,h)}$ with $\tilde{\ba}_t^{(l,h)}$ in
                   the input to $W_O^{(l)}$
        \EndIf
    \EndFor
    \State Complete the forward pass; sample next token $x_{t+1}$
\EndWhile
\end{algorithmic}
\end{algorithm}

The overhead of \ours{} per decode step is dominated by the $K$ proximity score computations. Each requires a matrix-vector product $\bB^{(l,h)}\bm{v}$ of cost $O(k \cdot d_h)$, followed by a norm computation of cost $O(k)$. The conditional correction, when triggered, requires one additional matrix-vector product $\bB^{(l,h)\top}(\bB^{(l,h)}\bm{v})$ of cost $O(k \cdot d_h)$. The total per-step overhead is $O(K \cdot k \cdot d_h)$.

\paragraph{Compositional Steering for Multiple Objectives.}
A fundamental challenge in multi-objective steering is that naively combining steering vectors for different constraints can produce conflicting corrections. We sidestep this problem by introducing MAGS$^\text{u}$, which treats each objective independently from the start. For each objective $c_k$, MAGS$^\text{u}$ extracts a dedicated error manifold, which naturally yields disjoint head sets $\mathcal{H}_1, \dots, \mathcal{H}_K$ during head selection. Then, at inference time, MAGS$^\text{u}$ steers the \emph{union} of all selected heads, applying each head's correction through its own independently learned manifold. 

\section{Experiments}
\label{sec:exp}
\subsection{Reasoning Benchmarks}
\paragraph{Setup and metrics.}
We evaluate across two reasoning domains chosen to span difficulty and output structure: mathematical reasoning on \textbf{MATH-500} \cite{lightman2023verify} and \textbf{GSM8K} \cite{cobbe2021gsm8k}, and code generation on \textbf{HumanEval} \cite{chen2021codex} and \textbf{MBPP} \cite{austin2021program}, where correctness is verified by executing test cases rather than by string matching.  To assess generality across model families, we run all reasoning
and code experiments on two instruction-tuned language models:
\textbf{Llama-3.1-8B-Instruct}~\cite{grattafiori2024llama3} and \textbf{Gemma-4-E4b-it}~\cite{gemma4team2025}.

\paragraph{Baselines.}
We compare against two families of inference-time steering methods. 

\emph{Representation-level methods}. \textbf{Inference-Time Intervention}  (ITI;\citealt{li2023iti}), which steers individual attention head outputs
along a fixed linear probe direction; \textbf{Angular Steering} (AS; \citealt{vu2025angular}), which applies a fixed 2D rotation in the mean-difference span across all layers. Since ITI and AS were originally designed for behavioral alignment rather than reasoning, we adapt them to our setting by treating correct and incorrect solution traces as the desired and undesired contrast sets, respectively, replacing their original prompt-pair construction.

\emph{Decoding-level methods}. \textbf{Contrastive Decoding} (CD; \citealt{li2023contrastive}) contrasts  the token distributions of a large expert model and a smaller amateur model at each decoding step, providing a baseline that operates at the output distribution level and a more computationally expensive method than all steering methods.

\paragraph{Manifold construction.}
For each benchmark, we collect contrastive trace pairs (i.e., same problem, one correct solution and one incorrect) from the corresponding training split: \textbf{Math-Instruct} \cite{yue2023mammoth} for MATH-500, the \textbf{GSM8K} \cite{cobbe2021gsm8k} training set for GSM8K, and \textbf{APPS} \cite{hendrycks2021apps} for both HumanEval \cite{chen2021codex} and MBPP\cite{austin2021program}. Traces are generated by sampling the base model; problems for which both a correct and an incorrect trace cannot be obtained within 8 samples are discarded. 

\begin{table*}[ht]
\centering
\caption{Performance and output fluency (perplexity, $\downarrow$) of steering methods on
         Llama-3.1-8B-Instruct. Best accuracy per benchmark in \textbf{bold};
         best perplexity in \textit{italics}.}
\label{tab:llama_reasoning}
\resizebox{0.9\textwidth}{!}{%
\begin{tabular}{l cccc cccc}
\toprule
 & \multicolumn{2}{c}{\textbf{MATH-500}}
 & \multicolumn{2}{c}{\textbf{GSM8k}}
 & \multicolumn{2}{c}{\textbf{HumanEval}}
 & \multicolumn{2}{c}{\textbf{MBPP}} \\
\cmidrule(lr){2-3}\cmidrule(lr){4-5}\cmidrule(lr){6-7}\cmidrule(lr){8-9}
\textbf{Method} & Acc.$\uparrow$ & PPL$\downarrow$
                & Acc.$\uparrow$ & PPL$\downarrow$
                & Acc.$\uparrow$ & PPL$\downarrow$
                & Acc.$\uparrow$ & PPL$\downarrow$ \\
\midrule
Unsteered & 0.478 & 1.229
          & 0.860 & 1.196
          & 0.561 & \textit{1.121}
          & 0.562 & 2.978 \\
\midrule
ITI       & 0.498 & \textit{1.227}
          & 0.855 & 1.335
          & 0.573 & 1.136
          & 0.548 & 2.250 \\
AS        & 0.506 & 1.232
          & 0.857 & 1.335
          & 0.591 & 1.131
          & 0.546 & \textit{2.214} \\
CD        & 0.492 & 1.295
          & 0.858 & 1.271
          & 0.591 & 1.152
          & 0.555 & 2.291 \\
\midrule
\textbf{\ours{}} & \textbf{0.530} & \textit{1.227}
                 & \textbf{0.867} & \textit{1.194}
                 & \textbf{0.604} & 1.130
                 & \textbf{0.574} & 2.970 \\
\bottomrule
\end{tabular}}
\end{table*}

\begin{table*}[ht]
\centering
\caption{Performance and output fluency (perplexity, $\downarrow$) of steering methods on
         Gemma-4-E4b-it. Best accuracy per benchmark in \textbf{bold};
         best perplexity in \textit{italics}}
\label{tab:gemma_reasoning}
\resizebox{0.9\textwidth}{!}{%
\begin{tabular}{l cccc cccc}
\toprule
 & \multicolumn{2}{c}{\textbf{MATH-500}}
 & \multicolumn{2}{c}{\textbf{GSM8k}}
 & \multicolumn{2}{c}{\textbf{HumanEval}}
 & \multicolumn{2}{c}{\textbf{MBPP}} \\
\cmidrule(lr){2-3}\cmidrule(lr){4-5}\cmidrule(lr){6-7}\cmidrule(lr){8-9}
\textbf{Method} & Acc.$\uparrow$ & PPL$\downarrow$
                & Acc.$\uparrow$ & PPL$\downarrow$
                & Acc.$\uparrow$ & PPL$\downarrow$
                & Acc.$\uparrow$ & PPL$\downarrow$ \\
\midrule
Unsteered & 0.614 & \textit{1.107}
          & 0.899 & 1.710
          & 0.560 & \textit{1.231}
          & 0.587 & 5.169 \\
\midrule
ITI       & 0.646 & 1.121
          & \textbf{0.913} & 1.293
          & 0.463 & 1.267
          & 0.557 & 5.053 \\
AS        & 0.566 & 1.279
          & 0.900 & 1.416
          & 0.518 & 1.334
          & 0.548 & 5.475 \\
CD        & 0.604 & 1.130
          & 0.874 & \textit{1.107}
          & 0.433 & 1.272
          & 0.593 & \textit{4.748} \\
\midrule
\textbf{\ours{}} & \textbf{0.648} & 1.108
                 & \textbf{0.913}          & 1.210
                 & \textbf{0.604} & 1.281
                 & \textbf{0.604} & 5.059 \\
\bottomrule
\end{tabular}}
\end{table*}

  

\paragraph{Result.} As shown in Table~\ref{tab:llama_reasoning} and Table~\ref{tab:gemma_reasoning}, \ours{} consistently outperforms the unsteered baseline and all three steering methods across both models and all four benchmarks. On MATH-500, where multi-step derivations provide the most opportunities for error compounding, \ours{} achieves the largest gains: +5.2 points over the unsteered Llama baseline and +3.4  points over the unsteered Gemma baseline. On GSM8K, where problems are shorter and errors more localized, the margin over baselines narrows, consistent with the intuition that proximity-triggered correction is most valuable when correct steps substantially outnumber erroneous ones.

Code generation reveals a sharper contrast. \ours{} improves HumanEval pass@1 by approximately 4 points over the unsteered baseline on both models. All three baselines \emph{degrade} HumanEval performance relative to the unsteered Gemma model, suggesting that unconditional interventions disrupt syntactic coherence even when the model's reasoning is already correct. \ours{} avoids this failure mode since its proximity trigger suppresses corrections at steps where no drift is detected.

Beyond accuracy, we evaluate output fluency via perplexity to assess whether steering distorts the model's generation distribution. MAGS consistently matches or approaches unsteered perplexity across benchmarks, confirming that the proximity threshold suppresses corrections on already-correct steps and leaves the output distribution largely intact. ITI and Angular Steering generally raise perplexity relative to the baseline, indicating that static interventions can potentially distort the generation distribution.

\subsection{Molecular Generation}

\paragraph{Task.}
Molecular generation requires producing syntactically and chemically valid SMILES strings~\cite{weininger1988smiles}. Unlike natural language, SMILES has rigid grammar rules: a single misplaced token renders the entire molecule invalid. Beyond validity, we additionally steer toward improved binding affinity against a target protein, measured via the AutoDock-GPU docking score \cite{Santos-Martins2021}. Since an ideal molecule would be both syntactically valid and chemically strong binding, this task induces a natural multi-objective structure.
\paragraph{Setup and Metrics.}
For molecular generation, we evaluate using \textbf{GPT-OSS 20B}~\cite{openai2025gptoss}. We report \textbf{Validity} (fraction of generated SMILES parseable; higher is better) and \textbf{Binding Affinity} (measured as AutoDock-GPU scores; lower is better). We generate 500 molecules per method. Contrastive Decoding is excluded as it requires a smaller companion model; no publicly available version of GPT-OSS below 20B parameters exists at the time of writing.

\begin{table}[h]
\centering
\caption{Molecular generation by steering GPT-OSS-20B.
  Validity: higher is better (\%).
  Binding Affinity: lower score is better(kcal/mol).}
\label{tab:molgen}
\begin{tabular}{lcc}
\toprule
\textbf{Method} & \textbf{Validity ($\uparrow$)} & \textbf{Binding Affinity ($\downarrow$)} \\
\midrule
Unsteered                          & 50.4 & -7.36 \\
Angular Steering & 51.4 & -7.44 \\
ITI         & \textbf{57.8} & -7.20 \\
\textbf{\ours{} }               & \underline{54.8} & \underline{-7.54} \\
\textbf{MAGS$^\text{u}$}               & \underline{54.8} & \textbf{-7.56} \\


\bottomrule
\end{tabular}
\end{table}
\paragraph{Result.} As shown in Table~\ref{tab:molgen}, ITI illustrates the consequence of the entanglement of objectives: it substantially improves validity but incurs a significant drop in binding affinity, a sign of overcorrection toward common valid scaffolds that are grammatically safe but chemically generic. MAGS$^{\text{u}}$ avoids this interference by learning two independent manifolds: the affinity manifold (contrasting high and low binding affinity) and the validity manifold (contrasting valid and invalid molecules). At inference time, the union of both subspaces applies each correction in its own independent direction, allowing binding affinity to improve without the validity–affinity interference observed in joint methods.

\section{Discussion}
\paragraph{Ablation on hyperparameter sensitivity.}
While we detail the experiment hyperparameters in Appendix~\ref{app:ablation}, Table~\ref{tab:ablation} reveals a clear contrast in robustness across methods. Steering Llama-3.1-8B on MATH-500, MAGS accuracy varies narrowly (0.492–0.530) across all configurations tested, indicating that the proximity threshold already governs when corrections are applied and reduces sensitivity to other hyperparameters. ITI spans a much wider range (0.214–0.498), with performance collapsing at high steering strength, confirming that static interventions are brittle to miscalibration. Angular Steering shows the greatest variance (0.206–0.506), with accuracy swinging dramatically across rotation angles.

\begin{figure}
    \centering
    \includegraphics[width=\linewidth]{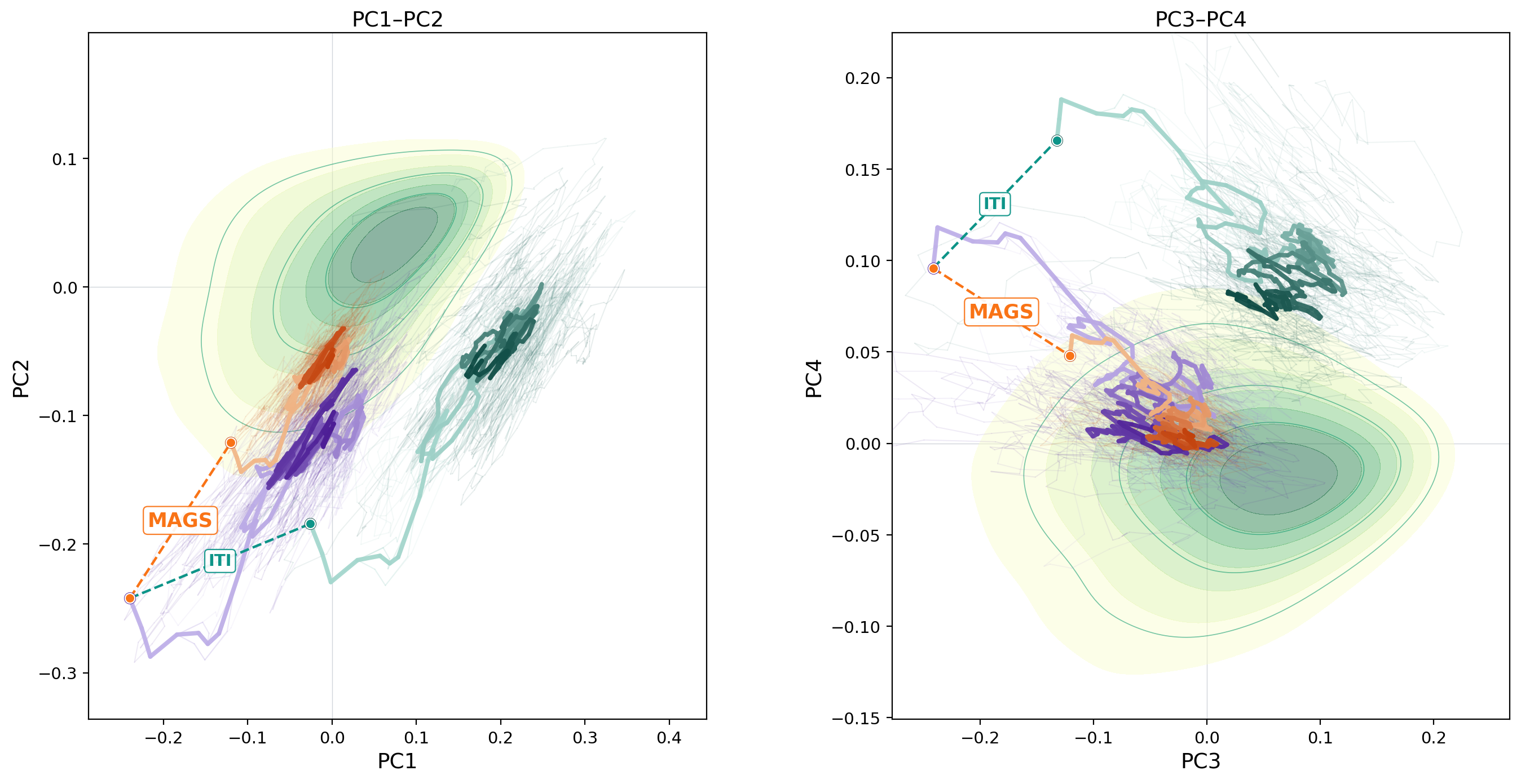}
    \caption{Latent-space trajectories of attention-head activations projected onto the top-4 principal components of the contrastive error subspace. \textbf{Filled contours}: kernel-density estimate of the correct-output activation distribution. \textbf{Purple}: unsteered mean trajectory (light $\to$ dark = early $\to$ late generation steps). \textbf{Orange}: MAGS-steered mean trajectory. \textbf{Teal}: ITI-steered mean trajectory. Faint lines show individual problem traces. Dashed connectors mark a correction step for MAGS  and ITI.}
    \label{fig:traj}
    \vspace{-1em}
\end{figure}
\paragraph{Latent trajectory analysis.}
To visualize how MAGS affects the model's internal representations during generation, we project the attention-head output activations at a steered layer onto the top-4 principal components of the contrastive error subspace. We compare MAGS against ITI as ITI also operates directly on attention-head representations. We run parallel steered and unsteered decodes on the same set of problems, collect per-step activations, and plot the resulting trajectories in the PC1-PC2 and PC3-PC4 planes.

Figure~\ref{fig:traj} reveals a clear divergence between methods. The unsteered trajectory (purple) drifts steadily away from the correct-output distribution and fails to recover, consistent with the hypothesis that reasoning errors manifest as directional activation drift. MAGS (orange) intervenes at the correction step (orange dashed connector) and immediately redirects the trajectory back toward the high-density correct
region, where it remains for the rest of the generation. ITI (teal), even after intervention, continues to diverge and settles in a region far from the correct-output distribution, which geometrically explains why static interventions can degrade performance: without a subspace constraint, the correction vector pushes activations in an imprecise direction that does not align with the correct-output manifold.

\section{Conclusion}
\label{sec:conclusion}
We presented Manifold-Guided Attention Steering (\ours{}), an inference-time method that corrects attention-head activations by projecting them back onto a low-rank manifold learned from correct-output traces, intervening only when a proximity threshold $\tau$ signals that the current activation has drifted clearly off-manifold and leaving already-correct steps undisturbed.  Across mathematical reasoning, code generation, and molecular generation on three model families, \ours{} consistently outperforms ITI, Angular Steering, and Contrastive Decoding. The manifold perspective further reframes steering as a \emph{drift detection} problem: the low-rank basis $B$ identifies the subspace most predictive of error onset, providing interpretable geometric structure that scalar-magnitude methods lack, as visualized through latent-space projections of steered trajectories being pulled back into the high-density region of correct-output activations. 

\paragraph{Limitations and future work.} \ours{} requires a set of contrastive-output traces for manifold fitting, which may be difficult to curate in low-resource domains. The current method also operates on a fixed set of target heads identified offline; an adaptive scheme that selects heads dynamically based on generation context could reduce reliance on this pre-selection step.  More broadly, we treat each head independently; jointly modeling interactions between heads may yield stronger corrections at lower computational cost.  Finally, while our experiments cover three domains, the degree to which the learned manifold transfers across tasks within a domain (e.g., arithmetic to symbolic reasoning) remains an open question. We leave these directions to future work.

\section*{Acknowledgements}
This work was supported, in part, by the National Science Foundation under grants CCF-2546822, CCF-2506134, CCF-2446711, CCF-2422214, \#2205093, \#2146343, \#2134274, \#2441832; the Schmidt Foundation; and a Microsoft Faculty Fellowship. This work was also supported in part by the U.S. Army Research Office under Army-ECASE award W911NF-07-R-0003-03; the U.S. Department of Energy, Office of Science; the ARPA-H SOL-24-101 program; the IARPA HAYSTAC Program; DARPA YFA; and CDC-RFA-FT-23-0069.
\bibliographystyle{plainnat}
\bibliography{refs}

@article{turner2023activation,
  title     = {Steering Language Models With Activation Engineering},
  author    = {Turner, Alex and Thiergart, Lisa and Udell, David and
               Leike, Jan and Mini, Ulisse and MacDiarmid, Monte},
  journal   = {arXiv preprint arXiv:2308.10248},
  year      = {2023}
}

@article{zou2023representation,
  title={Representation engineering: A top-down approach to ai transparency},
  author={Zou, Andy and Phan, Long and Chen, Sarah and Campbell, James and Guo, Phillip and Ren, Richard and Pan, Alexander and Yin, Xuwang and Mazeika, Mantas and Dombrowski, Ann-Kathrin and others},
  journal={arXiv preprint arXiv:2310.01405},
  year={2023}
}

@article{park2023linear,
  title     = {The Linear Representation Hypothesis and the Geometry of
               Large Language Models},
  author    = {Park, Kiho and Choe, Yo Joong and Veitch, Victor},
  journal   = {arXiv preprint arXiv:2311.03658},
  year      = {2023}
}

@article{elhage2021mathematical,
   title={A Mathematical Framework for Transformer Circuits},
   author={Elhage, Nelson and Nanda, Neel and Olsson, Catherine and Henighan, Tom and Joseph, Nicholas and Mann, Ben and Askell, Amanda and Bai, Yuntao and Chen, Anna and Conerly, Tom and DasSarma, Nova and Drain, Dawn and Ganguli, Deep and Hatfield-Dodds, Zac and Hernandez, Danny and Jones, Andy and Kernion, Jackson and Lovitt, Liane and Ndousse, Kamal and Amodei, Dario and Brown, Tom and Clark, Jack and Kaplan, Jared and McCandlish, Sam and Olah, Chris},
   year={2021},
   journal={Transformer Circuits Thread},
   note={https://transformer-circuits.pub/2021/framework/index.html}
}

@article{wang2022interpretability,
  title     = {Interpretability in the Wild: a Circuit for Indirect Object
               Identification in {GPT}-2 Small},
  author    = {Wang, Kevin and Variengien, Alexandre and Conmy, Arthur and
               Shlegeris, Buck and Steinhardt, Jacob},
  journal   = {arXiv preprint arXiv:2211.00593},
  year      = {2022}
}

@article{lightman2023verify,
  title     = {Let's Verify Step by Step},
  author    = {Lightman, Hunter and Kosaraju, Vineet and Burda, Yura and
               Edwards, Harri and Baker, Bowen and Lee, Teddy and
               Leike, Jan and Schulman, John and Sutskever, Ilya and
               Cobbe, Karl},
  journal   = {arXiv preprint arXiv:2305.20050},
  year      = {2023}
}

@article{zhang2025crest,
  title={Understanding and Steering the Cognitive Behaviors of Reasoning Models at Test-Time},
  author={Zhang, Zhenyu and Wu, Xiaoxia and Zhou, Zhongzhu and Wu, Qingyang and Zhang, Yineng and Ponnusamy, Pragaash and Subbaraj, Harikaran and Wang, Jue and Song, Shuaiwen Leon and Athiwaratkun, Ben},
  journal={arXiv preprint arXiv:2512.24574},
  year={2025}
}

@inproceedings{panickssery2024caa,
    title = "Steering Llama 2 via Contrastive Activation Addition",
    author = "Rimsky, Nina  and
      Gabrieli, Nick  and
      Schulz, Julian  and
      Tong, Meg  and
      Hubinger, Evan  and
      Turner, Alexander",
    editor = "Ku, Lun-Wei  and
      Martins, Andre  and
      Srikumar, Vivek",
    booktitle = "Proceedings of the 62nd Annual Meeting of the Association for Computational Linguistics (Volume 1: Long Papers)",
    month = aug,
    year = "2024",
    address = "Bangkok, Thailand",
    publisher = "Association for Computational Linguistics",
    url = "https://aclanthology.org/2024.acl-long.828/",
    doi = "10.18653/v1/2024.acl-long.828",
    pages = "15504--15522",
}

@inproceedings{
li2023iti,
title={Inference-Time Intervention: Eliciting Truthful Answers from a Language Model},
author={Kenneth Li and Oam Patel and Fernanda Vi{\'e}gas and Hanspeter Pfister and Martin Wattenberg},
booktitle={Thirty-seventh Conference on Neural Information Processing Systems},
year={2023}
}

@inproceedings{
vu2025angular,
title={Angular Steering: Behavior Control via Rotation in Activation Space},
author={Hieu M. Vu and Tan Minh Nguyen},
booktitle={The Thirty-ninth Annual Conference on Neural Information Processing Systems},
year={2025},
}

@article{austin2021program,
  title={Program Synthesis with Large Language Models},
  author={Austin, Jacob and Odena, Augustus and Nye, Maxwell and Bosma, Maarten and Michalewski, Henryk and Dohan, David and Jiang, Ellen and Cai, Carrie and Terry, Michael and Le, Quoc and Sutton, Charles},
  journal={arXiv preprint arXiv:2108.07732},
  year={2021}}

@article{chen2021codex,
  title={Evaluating Large Language Models Trained on Code},
  author={Mark Chen and Jerry Tworek and Heewoo Jun and Qiming Yuan and Henrique Pond{\'e} and Jared Kaplan and Harrison Edwards and Yura Burda and Nicholas Joseph and Greg Brockman and Alex Ray and Raul Puri and Gretchen Krueger and Michael Petrov and Heidy Khlaaf and Girish Sastry and Pamela Mishkin and Brooke Chan and Scott Gray and Nick Ryder and Mikhail Pavlov and Alethea Power and Lukasz Kaiser and Mo Bavarian and Clemens Winter and Phil Tillet and Felipe Petroski Such and David W. Cummings and Matthias Plappert and Fotios Chantzis and Elizabeth Barnes and Ariel Herbert-Voss and William H. Guss and Alex Nichol and Igor Babuschkin and Suchir Balaji and Shantanu Jain and Andrew Carr and Jan Leike and Josh Achiam and Vedant Misra and Evan Morikawa and Alec Radford and Matthew M. Knight and Miles Brundage and Mira Murati and Katie Mayer and Peter Welinder and Bob McGrew and Dario Amodei and Sam McCandlish and Ilya Sutskever and Wojciech Zaremba},
  journal={arXiv preprint arXiv: 2107:03374},
  year={2021}
}

@article{weininger1988smiles,
author = {Weininger, David},
title = {SMILES, a chemical language and information system. 1. Introduction to methodology and encoding rules},
journal = {Journal of Chemical Information and Computer Sciences},
volume = {28},
number = {1},
pages = {31-36},
year = {1988},
doi = {10.1021/ci00057a005},
URL = {https://doi.org/10.1021/ci00057a005},
eprint = {https://doi.org/10.1021/ci00057a005}}

@article{cobbe2021gsm8k,
  title={Training Verifiers to Solve Math Word Problems},
  author={Cobbe, Karl and Kosaraju, Vineet and Bavarian, Mohammad and Chen, Mark and Jun, Heewoo and Kaiser, Lukasz and Plappert, Matthias and Tworek, Jerry and Hilton, Jacob and Nakano, Reiichiro and Hesse, Christopher and Schulman, John},
  journal={arXiv preprint arXiv:2110.14168},
  year={2021}
}

@article{burns2023ccs,
  title={Discovering Latent Knowledge in Language Models Without Supervision},
  author={Burns, Collin and Ye, Haotian and Klein, Dan and Steinhardt, Jacob},
  journal={arXiv preprint arXiv:2212.03827},
  year={2022}
}

@inproceedings{
wang2023selfconsistency,
title={Self-Consistency Improves Chain of Thought Reasoning in Language Models},
author={Xuezhi Wang and Jason Wei and Dale Schuurmans and Quoc V Le and Ed H. Chi and Sharan Narang and Aakanksha Chowdhery and Denny Zhou},
booktitle={The Eleventh International Conference on Learning Representations },
year={2023}
}

@article{uesato2022solving,
  title={Solving Math Word Problems with Process- and Outcome-Based Feedback},
  author={Jonathan Uesato and Nate Kushman and Ramana Kumar and Francis Song and Noah Siegel and Lisa Wang and Antonia Creswell and Geoffrey Irving and Irina Higgins},
  journal={arXiv preprint arXiv:2211.14275},
  year={2022}
}

@inproceedings{li2023contrastive,
    title = "Contrastive Decoding: Open-ended Text Generation as Optimization",
    author = "Li, Xiang Lisa  and
      Holtzman, Ari  and
      Fried, Daniel  and
      Liang, Percy  and
      Eisner, Jason  and
      Hashimoto, Tatsunori  and
      Zettlemoyer, Luke  and
      Lewis, Mike",
    editor = "Rogers, Anna  and
      Boyd-Graber, Jordan  and
      Okazaki, Naoaki",
    booktitle = "Proceedings of the 61st Annual Meeting of the Association for Computational Linguistics (Volume 1: Long Papers)",
    month = jul,
    year = "2023",
    address = "Toronto, Canada",
    publisher = "Association for Computational Linguistics",
    url = "https://aclanthology.org/2023.acl-long.687/",
    doi = "10.18653/v1/2023.acl-long.687",
    pages = "12286--12312",

}

@article{yue2023mammoth,
  title={MAmmoTH: Building Math Generalist Models through Hybrid Instruction Tuning},
  author={Xiang Yue and Xingwei Qu and Ge Zhang and Yao Fu and Wenhao Huang and Huan Sun and Yu Su and Wenhu Chen},
  journal={arXiv preprint arXiv:2309.05653},
  year={2023}
}

@inproceedings{hendrycks2021apps,
  title={Measuring Coding Challenge Competence With APPS},
  author={Dan Hendrycks and Steven Basart and Saurav Kadavath and Mantas Mazeika and Akul Arora and Ethan Guo and Collin Burns and Samir Puranik and Horace He and Dawn Song and Jacob Steinhardt},
  booktitle={The Thirty-fifth Annual Conference on Neural Information Processing Systems},
  year={2021}
}

@article{grattafiori2024llama3,
  title   = {The Llama 3 Herd of Models},
  author  = {Grattafiori, Aaron and others},
  journal = {arXiv preprint arXiv:2407.21783},
  year    = {2024}
}

@misc{gemma4team2025,
  title        = {Gemma 4 Technical Report},
  author       = {{Gemma Team, Google DeepMind}},
  year         = {2025},
  howpublished = {\url{https://ai.google.dev/gemma/docs/core/model_card_4}}
}

@article{openai2025gptoss,
  title         = {gpt-oss-120b \& gpt-oss-20b Model Card},
  author        = {{OpenAI}},
  year          = {2025},
  journal        = {arXiv preprint arXiv:2508.10925}
}

@Article{Santos-Martins2021,
author={Santos-Martins, Diogo
and Solis-Vasquez, Leonardo
and Tillack, Andreas F.
and Sanner, Michel F.
and Koch, Andreas
and Forli, Stefano},
title={Accelerating AutoDock4 with GPUs and Gradient-Based Local Search},
journal={Journal of Chemical Theory and Computation},
year={2021},
month={Feb},
day={09},
publisher={American Chemical Society},
volume={17},
number={2},
pages={1060-1073},
issn={1549-9618},
doi={10.1021/acs.jctc.0c01006},
url={https://doi.org/10.1021/acs.jctc.0c01006}
}
\newpage

\appendix

\section{Proof of Proposition 1}
\begin{proof}
    Expanding using \eqref{eq:correction}:
    \begin{align}
        \inner{\tilde{\ba}_t^{(l,h)}}{\bm{v}}
        &= \inner{\ba_t^{(l,h)}}{\bm{v}}
        - \inner{\bB^{(l,h)\top}\bB^{(l,h)}
            (\ba_t^{(l,h)} - \bmu_c^{(l,h)})}{\bm{v}} \\
        &= \inner{\ba_t^{(l,h)}}{\bm{v}}
        - \inner{\bB^{(l,h)}(\ba_t^{(l,h)} - \bmu_c^{(l,h)})}
                {\bB^{(l,h)}\bm{v}} \\
        &= \inner{\ba_t^{(l,h)}}{\bm{v}}
        - \inner{\bB^{(l,h)}(\ba_t^{(l,h)} - \bmu_c^{(l,h)})}{\bm{0}}
        \;=\; \inner{\ba_t^{(l,h)}}{\bm{v}}. \qedhere
    \end{align}
\end{proof}

\section{Additional Experimental Details}
\label{app:ablation}
\subsection{Ablation on Hyperparameter Sensitivity}
For ITI, we follow the hyperparameter ranges reported in the original paper~\cite{li2023iti}, sweeping over the number of steered heads in $\{24, 48, 96\}$ and intervention strength $\alpha \in \{0.5, 1.0, 5.0\}$. For Angular Steering, rotation angles are sampled uniformly across the full $360^{\circ}$ range as evaluated in the original work~\cite{vu2025angular}. For MAGS, we select hyperparameters to minimize unnecessary interference: we restrict the number of steered heads to a small set (top-1 or top-3 by manifold signal strength) and sweep projection strength $\alpha \in \{0.3, 0.5, 0.7, 1.0\}$, reflecting the principle that corrections should be both targeted and conservative. Corrections are applied only to heads where drift is detected and only to the extent required to return activations to the correct-output manifold.

\begin{table*}[h]
\centering
\caption{Hyperparameter ablations on MATH-500 (Llama-3.1-8B-Instruct).
         Best result per method in \textbf{bold}.}
\label{tab:ablation}

\begin{subtable}[t]{0.28\linewidth}
\centering
\caption{MAGS: top-$k$ heads and steering strength $\alpha$.}
\begin{tabular}{ccr}
\toprule
Config & $\alpha$ & Acc. \\
\midrule
top-3 & 0.3 & 0.502 \\
top-3 & 0.5 & 0.498 \\
top-3 & 0.7 & 0.492 \\
top-3 & 1.0 & \textbf{0.530} \\
\midrule
top-1 & 0.7 & 0.492 \\
top-1 & 1.0 & \textbf{0.530} \\
\bottomrule
\end{tabular}
\end{subtable}
\hfill
\begin{subtable}[t]{0.36\linewidth}
\centering
\caption{ITI: number of steered heads $K$ and strength $\alpha$.}
\begin{tabular}{crrr}
\toprule
 & \multicolumn{3}{c}{Strength $\alpha$} \\
\cmidrule(lr){2-4}
$K$ & 0.5 & 1.0 & 5.0 \\
\midrule
24  & 0.476 & 0.472 & 0.298 \\
48  & 0.488 & 0.472 & 0.214 \\
96  & \textbf{0.498} & 0.480 & 0.314 \\
\bottomrule
\end{tabular}
\end{subtable}
\hfill
\begin{subtable}[t]{0.28\linewidth}
\centering
\caption{Angular Steering: rotation angle.}
\begin{tabular}{cr}
\toprule
Angle & Acc. \\
\midrule
$0^{\circ}$   & 0.488 \\
$30^{\circ}$  & \textbf{0.506} \\
$60^{\circ}$  & 0.466 \\
$90^{\circ}$  & 0.284 \\
$120^{\circ}$ & 0.206 \\
$150^{\circ}$ & 0.334 \\
$180^{\circ}$ & 0.412 \\
$210^{\circ}$ & 0.400 \\
$240^{\circ}$ & 0.344 \\
$270^{\circ}$ & 0.430 \\
$300^{\circ}$ & 0.500 \\
$330^{\circ}$ & 0.468 \\
\bottomrule
\end{tabular}
\end{subtable}

\end{table*}
\begin{figure}
    \centering
    \includegraphics[width=\linewidth]{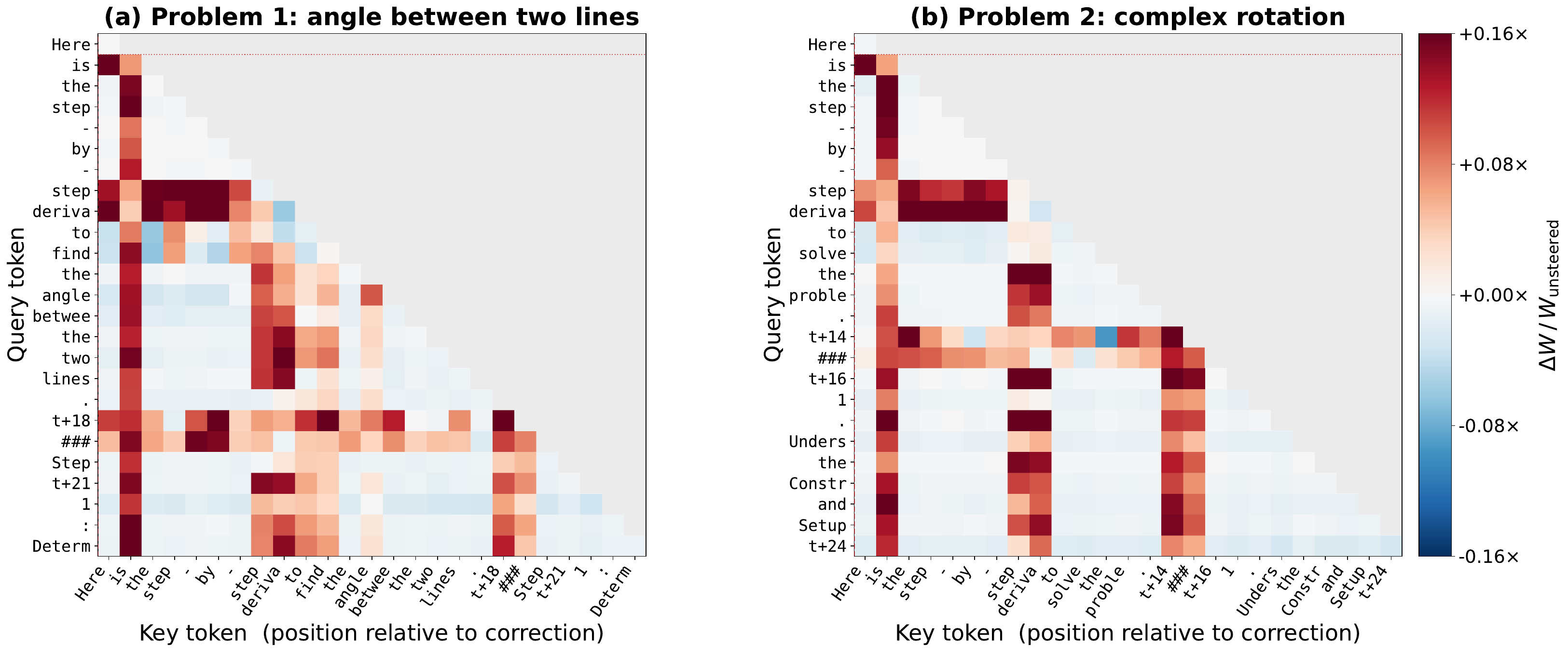}
    \caption{Relative attention shift $\Delta W / W_{\text{unsteered}}$ at layer $\ell_{\text{bip}}$ for two representative problems from MATH-500, steered with Gemma-4-E4b-it. Each cell shows the head-averaged attention change between the steered and unsteered model on the same forced token sequence. The dashed horizontal line marks the first correction step. $t_{\text{fire}}$}
    \label{fig:attn_bipartite}
\end{figure}
\newpage
\section{Visualization of the effect of steering on attention graph.} To examine how \ours{} reshapes information routing, we visualize the \emph{relative attention shift} at a layer after the correction layer, denoted~$\ell_{\mathrm{bip}}$.  Since \ours{} modifies the head \emph{output} $\mathbf{a}_h = A_h V_h$ rather than the routing matrix~$A_h$ directly, the corrected residual is written into the key-value cache at position~$t_{\mathrm{fire}}$. Then, subsequent query steps can attend to this corrected entry. To isolate the effect of accumulated KV-cache differences, both steered and unsteered streams are run on the same forced token sequence (the steered model's greedy outputs), so the two caches diverge solely through the corrections.
For each problem we average attention weights over all heads and compute
\[
  \frac{\Delta W}{W} \;=\;
  \frac{W_{\mathrm{steered}} - W_{\mathrm{unsteered}}}
       {W_{\mathrm{unsteered}} + \varepsilon}.
\]
Figure~\ref{fig:attn_bipartite} shows two representative problems from the
MATH-500 dataset for Gemma-4-E4b-it; red (blue) indicates positions attended
to more (less) by the steered model relative to the baseline.

The strongest signal in both panels is a set of \emph{vertical stripes}: certain key tokens receive consistently higher attention across all subsequent query steps. This reveals that \ours{} does not merely fix a single prediction in isolation, but makes the corrected reasoning step a more \emph{retrievable} context anchor: the entire generation following the correction keeps routing information through those tokens, reinforcing their influence persistently. The effect is consistent across both problems despite their different mathematical content, suggesting that this ``attention anchoring'' behavior may be a general mechanism by which \ours{} steers the model back onto a correct reasoning trajectory.

\subsection{Computational infrastructure.}
Experiments with Llama-3.1-8B-Instruct and Gemma-4-E4b-it were conducted on NVIDIA RTX 4090 GPUs (24\,GB VRAM) with model weights loaded in \texttt{float16} precision. GPT-OSS 20B experiments were run on NVIDIA H200 GPUs (141\,GB HBM3) due to
the larger memory footprint of the model.

\newpage
\section{Statistical Significance}
\label{app:bootstrap}

All confidence intervals are 95\% percentile bootstrap CIs with
$B{=}10000$ resamples (seed = 42).
Point estimates are original trial accuracies.
\begin{table}[h]
\centering
\caption{Bootstrap 95\% CIs on Gemma-4-E4b-it across all benchmarks
         ($B{=}10{,}000$ resamples, percentile method).}
\label{tab:bootstrap_gemma}
\resizebox{\textwidth}{!}{%
\begin{tabular}{l cc cc cc cc}
\toprule
 & \multicolumn{2}{c}{\textbf{MATH-500} ($N{=}500$)}
 & \multicolumn{2}{c}{\textbf{GSM8k} ($N{=}1319$)}
 & \multicolumn{2}{c}{\textbf{HumanEval} ($N{=}164$)}
 & \multicolumn{2}{c}{\textbf{MBPP} ($N{=}427$)} \\
\cmidrule(lr){2-3}\cmidrule(lr){4-5}\cmidrule(lr){6-7}\cmidrule(lr){8-9}
\textbf{Method} & Acc. & 95\% CI & Acc. & 95\% CI & Acc. & 95\% CI & Acc. & 95\% CI \\
\midrule
Unsteered
  & 0.614 & {[0.572, 0.656]}
  & 0.899 & {[0.887, 0.911]}
  & 0.560 & {[0.482, 0.634]}
  & 0.587 & {[0.543, 0.635]} \\
\midrule
ITI
  & 0.646 & {[0.604, 0.688]}
  & \textbf{0.913} & {[0.897, 0.927]}
  & 0.463 & {[0.384, 0.537]}
  & 0.557 & {[0.511, 0.604]} \\
AS
  & 0.566 & {[0.522, 0.610]}
  & 0.900 & {[0.883, 0.915]}
  & 0.518 & {[0.445, 0.592]}
  & 0.548 & {[0.501, 0.595]} \\
CD
  & 0.604 & {[0.562, 0.646]}
  & 0.874 & {[0.856, 0.892]}
  & 0.433 & {[0.360, 0.506]}
  & 0.593 & {[0.546, 0.639]} \\
\midrule
\textbf{\ours{}}
  & \textbf{0.648} & {[0.606, 0.690]}
  & \textbf{0.913} & {[0.898, 0.928]}
  & \textbf{0.604} & {[0.524, 0.677]}
  & \textbf{0.604} & {[0.557, 0.649]} \\
\bottomrule
\end{tabular}}
\end{table}

\begin{table}[h]
\centering
\caption{Bootstrap 95\% CIs on Llama-3.1-8B-Instruct across all benchmarks
         ($B{=}10{,}000$ resamples, percentile method).}
\label{tab:bootstrap_llama}
\resizebox{\textwidth}{!}{%
\begin{tabular}{l cc cc cc cc}
\toprule
 & \multicolumn{2}{c}{\textbf{MATH-500} ($N{=}500$)}
 & \multicolumn{2}{c}{\textbf{GSM8k} ($N{=}1319$)}
 & \multicolumn{2}{c}{\textbf{HumanEval} ($N{=}164$)}
 & \multicolumn{2}{c}{\textbf{MBPP} ($N{=}427$)} \\
\cmidrule(lr){2-3}\cmidrule(lr){4-5}\cmidrule(lr){6-7}\cmidrule(lr){8-9}
\textbf{Method} & Acc. & 95\% CI & Acc. & 95\% CI & Acc. & 95\% CI & Acc. & 95\% CI \\
\midrule
Unsteered
  & 0.478 & {[0.428, 0.516]}
  & 0.860 & {[0.841, 0.879]}
  & 0.561 & {[0.494, 0.640]}
  & 0.562 & {[0.515, 0.609]} \\
\midrule
ITI
  & 0.498 & {[0.456, 0.540]}
  & 0.855 & {[0.836, 0.874]}
  & 0.573 & {[0.494, 0.646]}
  & 0.548 & {[0.501, 0.595]} \\
AS
  & 0.506 & {[0.462, 0.550]}
  & 0.857 & {[0.839, 0.876]}
  & 0.591 & {[0.512, 0.665]}
  & 0.546 & {[0.499, 0.593]} \\
CD
  & 0.492 & {[0.456, 0.542]}
  & 0.858 & {[0.839, 0.876]}
  & 0.591 & {[0.518, 0.665]}
  & 0.555 & {[0.508, 0.602]} \\
\midrule
\textbf{\ours{}}
  & \textbf{0.530} & {[0.486, 0.574]}
  & \textbf{0.867} & {[0.848, 0.886]}
  & \textbf{0.604} & {[0.524, 0.677]}
  & \textbf{0.574} & {[0.527, 0.621]} \\
\bottomrule
\end{tabular}}
\end{table}


\end{document}